\title{Gated Information Bottleneck for Generalization in Sequential Environments}
\author{Francesco Alesiani, Shujian Yu, Xi Yu}
\author{
    \IEEEauthorblockN{Francesco Alesiani\IEEEauthorrefmark{1}, Shujian Yu\IEEEauthorrefmark{2}\IEEEauthorrefmark{3}, Xi Yu\IEEEauthorrefmark{4}}
    \IEEEauthorblockA{\IEEEauthorrefmark{1} NEC Laboratories Europe
    }
    \IEEEauthorblockA{\IEEEauthorrefmark{2} UiT - The Arctic University of Norway
    }
    \IEEEauthorblockA{\IEEEauthorrefmark{3} Xi'an Jiaotong University
    }
    \IEEEauthorblockA{\IEEEauthorrefmark{4} University of Florida
    }
    \thanks{Francesco Alesiani (Francesco.Alesiani@neclab.eu) and Shujian Yu (yusj9011@gmail.com) contributed equally and are contact authors.}
}
\DeclarePairedDelimiter{\diagfences}{(}{)}
\newcommand{\tr}{\operatorname{tr}\diagfences}
\newcommand{\KL}{\operatorname{KL}\diagfences}
\DeclareMathOperator{\E}{\mathbb{E}}
\newcommand\independent{\protect\mathpalette{\protect\independenT}{\perp}}
\def\independenT#1#2{\mathrel{\rlap{$#1#2$}\mkern2mu{#1#2}}}
\declaretheorem[name=Lemma]{lem}
\let\wfs@comment@comment\comment
\let\comment\@undefined
\let\wfs@changes@comment\comment
\let\comment\@undefined
\newcommand\comment{%
    \ifthenelse{\equal{\@currenvir}{comment}}
    {\wfs@comment@comment}
    {\wfs@changes@comment}%
}
\begin{document}

\maketitle

\begin{abstract}

Deep neural networks suffer from poor generalization to unseen environments when the underlying data distribution is different from that in the training set. By learning minimum sufficient representations from training data, the information bottleneck (IB) approach has demonstrated its effectiveness to improve generalization in different AI applications. In this work, we propose a new neural network-based IB approach, termed gated information bottleneck (GIB), that dynamically drops spurious correlations and progressively selects the most task-relevant features across different environments by a trainable \textit{soft mask} (on raw features). GIB enjoys a simple and tractable objective, without any variational approximation or distributional assumption. We empirically demonstrate the superiority of GIB over other popular neural network-based IB approaches in adversarial robustness and out-of-distribution (OOD) detection. Meanwhile, we also establish the connection between IB theory and invariant causal representation learning, and observed that GIB demonstrates appealing performance when different environments arrive sequentially, a more practical scenario where invariant risk minimization (IRM) fails. Code of GIB is available at \url{https://github.com/falesiani/GIB}.




\end{abstract}


\section{Introduction}
Modern machine learning is based on the concept of independent and identical distributed (\emph{i.i.d.}) samples. However, when the test distribution is different form the training distribution, a deep neural network (DNN) trained by miminizing the  empirical risk minimization (ERM) is prone to capture spurious correlations in the training set, which in turn may lead to catastrophic performance loss during deployment.



The IB principle has been demonstrated effective in improving generalization performance of DNNs in many practical applications. It also provides a solid mathematical tool to analyze and understand the generalization behaviour of DNNs during training~\cite{shwartz2017opening,yu2019understanding}. However, existing neural network-based IB approaches, such as variational information bottleneck (VIB)~\cite{alemi2016deep}, nonlinear information bottleneck (NIB)~\cite{kolchinsky2019nonlinear} and conditional entropy bottleneck (CEB)~\cite{fischer2020conditional}, just provide empirical justifications on their superiority on either out-of-distribution (OOD) detection or generalization in different AI applications~\cite{mahabadi2021variational,kim2021drop,fischer2020ceb}, without a further explanation of the reasons behind the success of these methods.


In this work, we establish the connection between IB theory and invariant causal representation learning, and propose
a new neural network-based IB approach that generalizes very well in a sequential environment scenario. Specifically, our main contributions are fourfold:
\begin{itemize}
    \item We establish the connection between IB theory and invariant causal representation learning, which provides theoretical justification on the generalization performances of various IB approaches in AI applications.
    \item We propose gated information bottleneck (GIB), a new IB approach that is able to progressively drop spurious correlations and select the most task-relevant features. Using the recently proposed matrix-based R{\'e}nyi's $\alpha$-order mutual information~\cite{giraldo2014measures,yu2021measuring}, GIB can be simply optimized without variational approximation and distributional assumptions.
    \item GIB outperforms other popular neural network-based IB approaches (such as VIB and NIB) in adversarial robustness and OOD detection.
    \item GIB enables reliable generalization when different environments arrive sequentially, a more practical scenario where invariant risk minimization (IRM)~\cite{arjovsky2019invariant} fails.
\end{itemize}

\section{Information Bottleneck and OOD Generalization}
\subsection{Problem Formulation}

Let $\mathcal{X}$ and $\mathcal{Y}$ be the input and the label spaces. Given training set $D_e$, we obtain $N_e$ training samples $\{\mathbf{x}_i,y_i\}_{i=1}^{N_e}$ following a distribution $P_e(\mathbf{x},y)$, defined over $\mathcal{X}\times\mathcal{Y}$.
Our goal is to learn a neural network $f_\theta = w \circ \Phi$, composed of a feature extractor $\Phi$ and a classifier $w$, that generalizes well to a different, but related, unseen test set $D_t$, which follows a new distribution $P_t(\mathbf{x},y)$, i.e., minimizing the objective:
\begin{equation}
    \mathbb{E}_{(\mathbf{x},y)\sim D_t}\left[\ell(\theta;\mathbf{x},y)\right],
\end{equation}
where $\ell(\theta;\mathbf{x},y):\mathcal{W}\times \mathcal{X} \times \mathcal{Y}\rightarrow\mathbb{R}$ is the loss function of $\theta$ associated with sample $(\mathbf{x},y)$, and $\mathcal{W}\subseteq\mathbb{R}^d$ is the model parameter space. This problem is also known as out-of-distribution (OOD) generalization.

In a multi-environment setting, we have a set of training environments $E =\{e_1,e_2,\cdots,e_m\}$. Each environment $e_i$ has its own associated training data set $D_{e_i}$ drawn from distribution $P_{e_i}(\mathbf{x},y)$. Usually, all environments are assumed to be available to a learning system at the same time, which is unrealistic in numerous applications. A learning agent experiences environments often sequentially rather than concurrently.

\subsection{Information Bottleneck (IB) Approach}
IB approach considers extracting information about a target variable $Y$ (e.g., class labels) through a correlated observable $X$ (e.g., raw features). The extracted information is quantified by a variable $Z$, which is (a possibly randomized) function of $X$, thus forming the Markov chain $Y \leftrightarrow Z \leftrightarrow X$. Suppose we know the joint distribution $p(X,Y)$, the objective is to learn a representation $Z$ that maximizes its predictive power to $Y$ subject to some constraints on the amount of information that it carries about $X$:
\begin{equation}\label{eq:IB_Lagrangian}
    \mathcal{L}_{IB}=I(Y;Z) - \lambda I(X;Z),
\end{equation}
where $I(\cdot;\cdot)$ denotes the mutual information. $\lambda$ is a Lagrange multiplier that controls the trade-off between the \textbf{sufficiency} (the performance on the task, as quantified by $I(Y;Z)$) and the \textbf{minimality} (the complexity of the representation, as measured by $I(X;Z)$). In this sense, the IB principle also provides a natural approximation of \emph{minimal sufficient statistic}~\cite{gilad2003information}.

The IB principle has analytical solution when $X$ and $Y$ are either joint Gaussian~\cite{chechik2005information} or discrete~\cite{tishby2000information}. When parameterizing IB with a DNN, $Z$ refers to one of the hidden layers~\cite{dai2018compressing}.

\subsection{Generalization from a Causal Perspective}\label{sec:connection}

\begin{figure}
	\centering
    \includegraphics[width=0.7\linewidth]{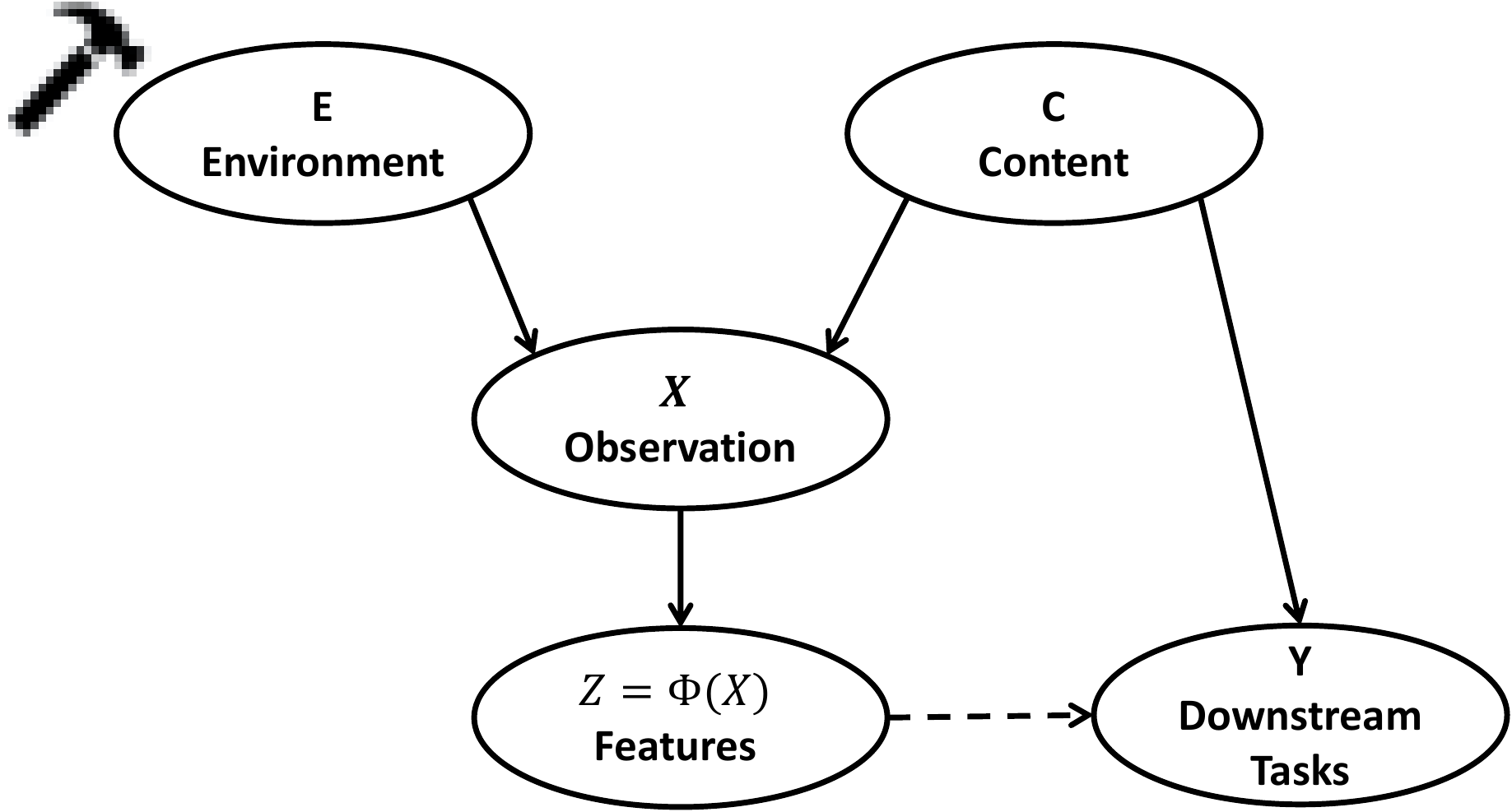}
	\caption{Causal graph: The observation $X$ is generated from the the content $C$ and the environment $E$. Content $C$ individually influences the downstream task $Y$, whereas environment $E$ does not. We aim to learn representations $Z=\Phi (X)$ from $X$ that is suitable to predict $Y$.}
	\label{fig:causal_model}
\end{figure}

When learning with multiple environments, we can model the observed data using the causal model of Fig.~\ref{fig:causal_model}~\cite{mitrovic2020representation}. From a causal prospective \citep{arjovsky2019invariant,ahuja2020invariant,pearl2009causality}, we consider each environment as an intervention in the causal graph.
We assume the observation $X$ is generated by content $C$ and the environment $E$, and only content is relevant for solving downstream tasks~\cite{mitrovic2020representation}. We  additionally assume that content and environment are independent, i.e., environment changes but content preserves. A classical example is that a cow sitting on green pastures or in an arid environment. The environment $E$ refers to the grass or the arid areas, whereas the cow or different parts of cow could be the content $C$ that ``generates" the image of the cow $X$.
We aim at learning features $Z$ (e.g. presence and shape of legs or hears of the cow) that are independent from the environment interventions (e.g. the cow shape is independent by the presence of sand on the beach). That is, $Z = \Phi(X) \independent E$. We also expect $Z$ is able to correctly predict the labels $Y$ (i.e., the cow in our example).

As a proxy to independence, practical methods learn invariant models~\citep{arjovsky2019invariant,ahuja2020invariant,peters2016causal}.
If the learned representations are invariant under changes in $E$ or interventions, their probability distributions remain the same~\cite{moyer2018invariant}.
In this sense, suppose the features of two environments have probability distributions $p^1(z)$ and $p^2(z)$, respectively, we seek to minimize:
\begin{align}
	\min_{\phi} \KL{ p^1(z) || p^2(z)}.
\end{align}

While this approach requires access to full probability distribution of the features, we consider an alternative approach of minimizing the conditional distributions over the samples $x \sim p^1(x),x'\sim p^2(x)$ of the two environments, or
\begin{align}
	\min_{\phi} \E_{x,x'} \KL{ p^1(z|x) || p^2(z|x')}.
\end{align}

{\color{black}
The equivalence of these two approaches is supported by Lemma~\ref{th:kl_z_env_minimal}. All proofs are available in the Appendix.
\begin{lem} \label{th:kl_z_env_minimal}
Given two random variables $X,Z$, with two conditional distributions $p^1(z|x),p^2(z|x)$,
we have:
\begin{align*}
    \KL{ p^1(z) || p^2(z)} &\approx \E_{x,x'} \KL{ p^1(z|x) || p^2(z|x')}.
\end{align*}
\end{lem}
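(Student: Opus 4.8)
The plan is to obtain the statement from the joint convexity of the KL divergence and then to isolate the approximation gap in order to justify the $\approx$. The starting observation is that each marginal is a mixture of the corresponding conditionals, $p^1(z)=\E_{x\sim p^1}\left[p^1(z|x)\right]$ and $p^2(z)=\E_{x'\sim p^2}\left[p^2(z|x')\right]$, where the two expectations are taken under the respective environment marginals on $X$. Because $\KL{\cdot||\cdot}$ is jointly convex in its two arguments, I would apply the (continuous) convexity inequality with the \emph{product} measure $p^1(x)\,p^2(x')$ as the mixing distribution over the index pair $(x,x')$, placing $p^1(z|x)$ in the first slot and $p^2(z|x')$ in the second. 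Marginalizing the first slot over $(x,x')$ returns $p^1(z)$ and marginalizing the second returns $p^2(z)$, so this step yields the exact inequality
\begin{equation*}
\KL{p^1(z)||p^2(z)} \le \E_{x,x'}\KL{p^1(z|x)||p^2(z|x')}.
\end{equation*}

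To promote this inequality to the claimed approximate equality, I would next expand both sides into entropy and cross-entropy terms. Writing the right-hand side as $-H(Z\mid X)_{p^1} - B$ with $B=\E_{x,x'}\int p^1(z|x)\log p^2(z|x')\,dz$, Fubini's theorem factorizes $B=\int p^1(z)\,\E_{x'\sim p^2}\!\left[\log p^2(z|x')\right]dz$, while the left-hand side reads $-H(Z)_{p^1}-\int p^1(z)\log p^2(z)\,dz$. The discrepancy then splits into two nonnegative pieces: the term $H(Z)_{p^1}-H(Z\mid X)_{p^1}=I(X;Z)_{p^1}$, and a Jensen gap coming from replacing $\E_{x'\sim p^2}\!\left[\log p^2(z|x')\right]$ by $\log\E_{x'\sim p^2}\!\left[p^2(z|x')\right]=\log p^2(z)$. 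Both pieces vanish exactly when the conditionals concentrate and $Z$ carries little information about the specific input, i.e.\ $I(X;Z)\to 0$ --- precisely the minimality/compression regime promoted by the IB objective --- which is what licenses the $\approx$.

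I expect the second step to be the main obstacle. The inequality from joint convexity is clean and unconditional, but the passage to approximate equality is genuinely an approximation: it is controlled by the gap identified above rather than being an identity, so the delicate point is to argue that $I(X;Z)$ together with the Jensen term is negligible in the operating regime of a trained bottleneck, instead of claiming equality in general. A complementary route, useful for making the gap quantitative, is to specialize to Gaussian encoders $p^i(z|x)=\mathcal{N}(z;\mu^i(x),\Sigma)$ and moment-match the marginals; there the conditional KL becomes a quadratic form in $\mu^1(x)-\mu^2(x')$ and the gap is an explicit function of $\Cov_x[\mu^i(x)]$, making transparent that it closes as the encoder compresses.
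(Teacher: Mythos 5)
Your proof is correct, and it takes a genuinely different route from the paper's. The paper proves the lemma by direct expansion: it writes $p^1(z)$ and $p^2(z)$ as mixtures inside the logarithms and applies Jensen twice --- once as a true inequality on the $\ln p^1(z)$ term (which points in the lower-bound direction $\ge$), and once only as an approximation on the $\ln p^2(z)$ term, where the minus sign makes Jensen point the opposite way --- followed by a final heuristic substitution $p^1(z|x')\to p^1(z|x)$; the net result is only the informal $\approx$ of the statement, with no clean one-sided bound and no explicit error term. Your argument is sharper on both counts: joint convexity of the KL divergence under the product mixing measure $p^1(x)\,p^2(x')$ gives the exact, unconditional inequality
\begin{equation*}
    \KL{p^1(z)||p^2(z)} \le \E_{x,x'}\KL{p^1(z|x)||p^2(z|x')},
\end{equation*}
which is the same direction as the paper's Lemma~\ref{th:mutual_information_env_minimal} and in fact subsumes it (when $p^1(x)=p^2(x)$, the left side dominates the CDMI term by the chain of the paper's Lemmas 3--4), and your Fubini/entropy decomposition identifies the gap exactly as $I^1(X;Z)$ plus the nonnegative Jensen term $\int p^1(z)\bigl[\ln p^2(z)-\E_{x'\sim p^2}\ln p^2(z|x')\bigr]\,dz$. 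One small refinement to your closing claim: that Jensen piece vanishes when $p^2(z|x')$ is (a.s.) constant in $x'$ on the support of $p^1(z)$, so it is controlled by compression in environment~2 rather than by $I^1(X;Z)$; the honest statement is that the $\approx$ is licensed when \emph{both} environments' representations are compressed, which is indeed the IB operating regime the paper appeals to. What your route buys is a rigorous surrogate-bound justification --- minimizing the conditional-KL objective in Eq.~(\ref{eq:invariant_features}) provably controls the marginal-KL invariance objective --- together with a quantitative handle on the approximation error, where the paper offers only a heuristic chain of approximations; the Gaussian-encoder specialization you sketch is a useful bonus but not needed for the lemma as stated.
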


By minimizing KL divergence of the conditional distributions, we implicitly minimize the mutual information between $x$ and $z$. This property is confirmed by Lemma~\ref{th:mutual_information_env_minimal}.
\begin{restatable}[Cross-Domain Mutual Information Upper Bound]{lem}{mutualinformationenvminimal}
\label{th:mutual_information_env_minimal}
Given two random variables $X,Z$,
with two conditional distributions $p^1(z|x),p^2(z|x)$
and assume the same marginal $p^1(x)=p^2(x)$, let us define the cross-domain mutual Information (CDMI) as $I^{12}(X;Z)=\KL{p^1(z,x)||p^2(z)p^2(x)}$, then the value of CDMI is upper bounded by:
\begin{align}
    I^{12}(X;Z)  & \le \E_x \E_{x'} \KL{ p^1(z|x)||p^2(z|x')}.
\end{align}
\end{restatable}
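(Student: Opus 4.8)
The plan is to reduce the claim to the convexity of the KL divergence in its second argument, after first rewriting CDMI in a more transparent form. I would begin by unpacking the definition. Writing $p^1(z,x)=p^1(z|x)\,p^1(x)$ and invoking the hypothesis $p^1(x)=p^2(x)=:p(x)$, the marginal factor $p^2(x)$ in the denominator cancels against $p^1(x)$ in the numerator, so that
\begin{align*}
I^{12}(X;Z)=\E_x \KL{p^1(z|x) || p^2(z)}.
\end{align*}
In words, the cross-domain mutual information is exactly the expected (over the shared input marginal $p$) divergence of the environment-1 posterior from the environment-2 marginal over $Z$. This is the step where the assumption $p^1(x)=p^2(x)$ is essential.

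Next I would express the environment-2 marginal as a mixture over inputs, $p^2(z)=\int p^2(z|x')\,p^2(x')\,\dd{x'}=\E_{x'}[p^2(z|x')]$, again using $p^2(x')=p(x')$. The key observation is then that for any fixed first argument $q$, the map $m\mapsto\KL{q || m}$ is convex: indeed $\KL{q||m}=\int q\log q-\int q\log m$, the first term is constant in $m$, and $-\log$ is convex so $-\int q\log m$ is convex in $m$. Applying Jensen's inequality with $q=p^1(z|x)$ and the mixing distribution $p(x')$ yields
\begin{align*}
\KL{p^1(z|x) || \E_{x'}[p^2(z|x')]} \le \E_{x'} \KL{p^1(z|x) || p^2(z|x')}.
\end{align*}

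Finally I would take $\E_x$ of both sides and combine with the first display. Since the left-hand side above equals $\KL{p^1(z|x) || p^2(z)}$, this gives $I^{12}(X;Z)=\E_x \KL{p^1(z|x) || p^2(z)}\le \E_x\E_{x'}\KL{p^1(z|x) || p^2(z|x')}$, which is the stated bound.

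I expect the only delicate points to be bookkeeping rather than anything deep: correctly cancelling the shared marginal in the definition (which requires $p^1(x)=p^2(x)$ exactly), and stating the convexity of $\KL{q||\cdot}$ cleanly so that Jensen applies to the second slot rather than the first. If one prefers not to invoke convexity as a black box, the identical inequality follows from the log-sum inequality applied to the term $-\int p^1(z|x)\log p^2(z)\,\dd{z}$, but the convexity route is the shortest and is the main idea of the argument.
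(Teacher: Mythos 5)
Your proof is correct and takes essentially the same route as the paper's: the paper likewise uses $p^1(x)=p^2(x)$ to rewrite $I^{12}(X;Z)=\E_x \KL{p^1(z|x)||p^2(z)}$ and then invokes Lemma~\ref{th:kl_env}, whose proof is precisely your Jensen step --- the concavity of the logarithm applied to the mixture $p^2(z)=\E_{x'}\left[p^2(z|x')\right]$, which is the same as convexity of $\KL{q||\cdot}$ in the second argument. There are no gaps; your phrasing of the key inequality via convexity of the KL divergence is just a cleaner packaging of the identical argument.
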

}

Combining Lemma~\ref{th:kl_z_env_minimal} and Lemma~\ref{th:mutual_information_env_minimal}, it is easy to find that the invariant representation constraint that originates from a causal perspective to generalization (i.e., $\min \KL{p^1(z) || p^2(z)}$) can be interpreted as a minimization of the upper bound of CDMI $I^{12}(X;Z)$.
On the other hand, the IB objective (especially $\min I(X;Z)$ in Eq.~(\ref{eq:IB_Lagrangian})) in the limit $\lambda \to 0$, yields a sufficient invariant representation $Z$ of the test datum $X$ for the task $Y$~\cite{achille2018emergence}.
These connections jointly provide new insights why IB principle is beneficial to OOD generalization.

\section{Gated Information Bottleneck }

{\color{black}
\subsection{Information Bottleneck in a Multi-Environment Setup}
Following Fig.~\ref{fig:causal_model} and Lemma~\ref{th:kl_z_env_minimal}, a natural objective to learn invariant representation for generalization is given by:
\begin{align} \label{eq:invariant_features}
   \min_{\Phi,w } & \sum_{e \in E} R^e(w \circ \Phi  ) \\
   &~ + \lambda  \sum_{e \in E} \sum_{e' \in E} \E_{x \sim p^e(x)} \E_{x' \sim p^{e'}(x)}  {\KL{p(z|x)||p(z|x')}}, \nonumber
\end{align}
where the first term minimizes the risk from each training environment, while the second term promotes the invariance of features across environments.

By the connections established in Section~\ref{sec:connection},
we can, instead, resort to minimizing the following objective:
\begin{align}\label{eq:IB_multi}
   \min_{\Phi,w } & \sum_{e \in E} R^e(w \circ \Phi  ) + \lambda  I^e(X;Z),
\end{align}
which turns out to be the information bottleneck approach in multi-environment setup.

While Eq.~(\ref{eq:IB_multi}) is general and addresses our problem, the estimation of mutual information $I(X;Z)$ is not a trivial task, especially in high-dimensional space.

In order to simplify our implementation, we implement Eq.~(\ref{eq:IB_multi}) by a deterministic encoder $X\mapsto Z$. An appealing property arises immediately~\cite{amjad2019learning}:
\begin{equation}
    I(X;Z) = H(Z) - H(Z|X) = H(Z).
\end{equation}

This is just because in a deterministic neural network, the distribution of $z$ solely depends on that of $x$, such that the uncertainty of the conditional distribution $p(z|x)$ is zero. We can therefore rewrite Eq.~(\ref{eq:IB_multi}) as:
\begin{align}\label{eq:IB_multi_entropy}
   \min_{\Phi,w } & \sum_{e \in E} R^e(w \circ \Phi ) + \lambda H^e(Z)
\end{align}


\subsection{GIB: General Idea and a Trainable Soft Mask}

\begin{figure}
	\centering
	\includegraphics[width=0.49\textwidth]{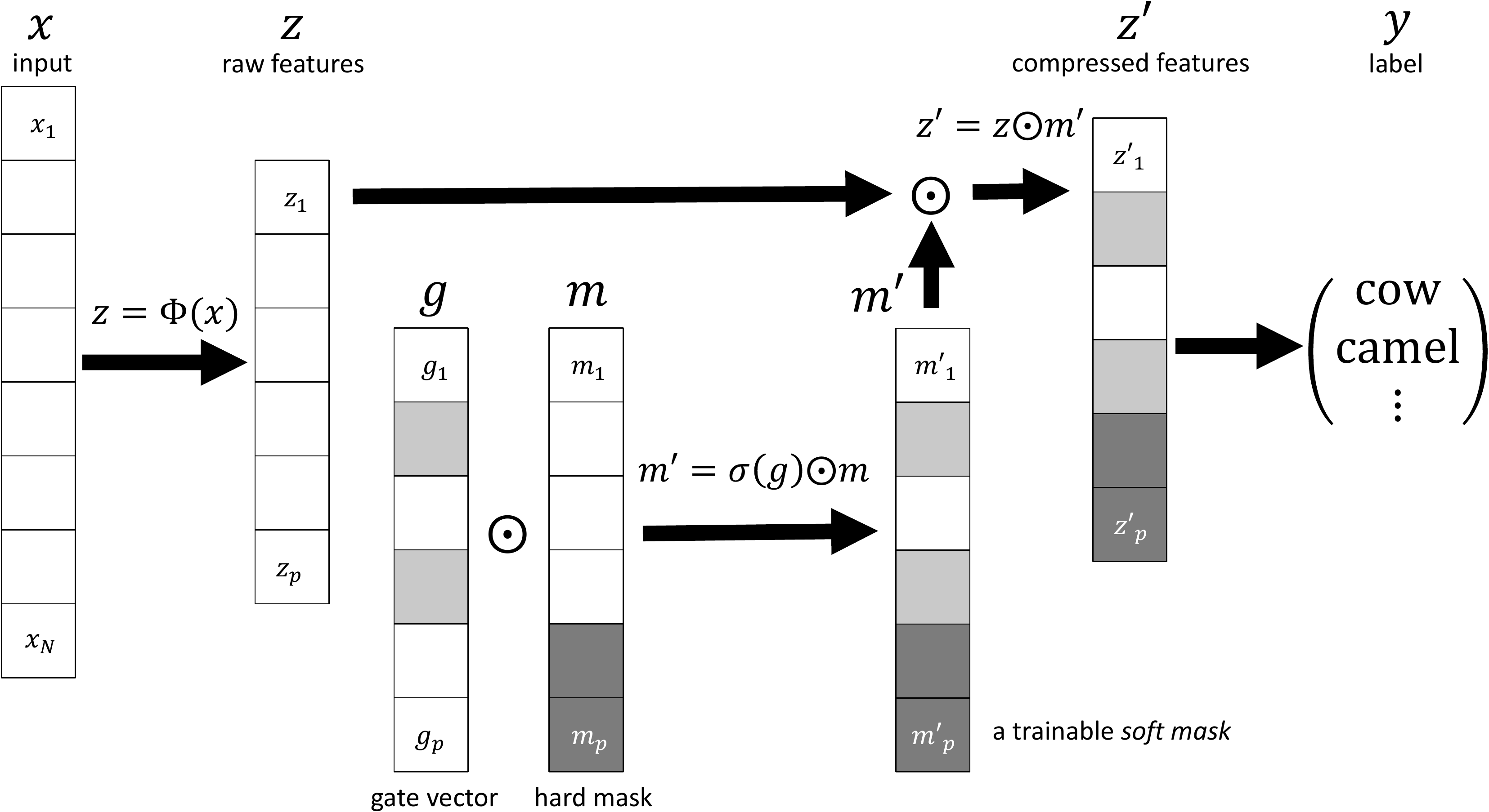}
	\caption{GIB architecture: GIB extracts compressed features $Z'$ (in a lower dimensional space) from raw features $Z$ extracted by the $\Phi$ using a trainable \emph{soft mask} $m'$, which is then fed into a classify $w$ to predict labels. The \emph{soft mask} is implemented by a gating vector $g$ and a hard mask $m$ determined by $g$.}
	\label{fig:gating}
\end{figure}

To implement Eq.~(\ref{eq:IB_multi_entropy}) and promote invariant feature map across environments, we introduce a soft gating mechanism on raw extracted features $z$ to progressively remove spurious correlations contained in $z$. We name our framework the Gated Information Bottleneck (GIB). The general architecture of GIB is
illustrated in Fig.~\ref{fig:gating}, in which:
\begin{align} \label{eq:gate}
    m' &= m \odot \sigma(g) & ~~  \Phi'  &= m' \odot \Phi & ~~ z &=  \Phi'(x)
\end{align}
The \emph{soft mask} $m'$ consists of two components: 1) a gating function $\sigma(g)$, with $g \in \mathbb{R}^p$ being a trainable vector and $\sigma: \mathbb{R} \rightarrow [0,1]$ a non-linear activation function in $[0,1]$ such as Sigmoid; and 2) a hard mask $m \in \{0,1\}^p$.
The masking component promotes the intersection of the domain of features of the environments. The first term is trained alongside the other network parameters, while the second term is used to progressively reduce the number of active neurons (that consist of spurious correlations) and is updated based on $\sigma(g)$ and its previous values.


The role of the hard mask is to avoid that features return active after been dropped out or to force feature to be zero. We implement these two strategies using
\begin{align} \label{eq:guillotine}
    m = 1_{ \{ \sigma(g) \ge \tau \} } & ~ \text{or} ~
    m = 1_{ \{ \sigma(g) \ge \tau \land m = 1 \} },
\end{align}
with $\tau$ a hyper-parameter which is set to be $0.5$ throughout this work, and $\land$ the logic {\it AND} that is used to avoid removed features to be reactivated. We also limit the minimum number of non-zero features (i.e. hard mask non-zero entries) to $\underbar{m}$ to avoid network collapse. We update $m$ every $L$ steps.

In this sense, $\sigma(g)$ can be interpreted as the probability that if a feature is selected. The combination of $\sigma(g)$ and $m$ plays the role of a \emph{soft mask} on features. For example, for the $i$-th feature (i.e., neuron), if $g_i=3$, we have $m_i=1$ and $m'=\sigma(g_i)\approx 0.95$, which indicates that the $i$-th feature is selected for the downstream task with a probability of $0.95$ and is also re-scaled by a factor of $0.95$. On the other hand, if $g_i=-3$, we have $m_i=m'_i=0$, which indicates the $i$-th feature is dropped out.
The details of GIB is presented in Algorithm~\ref{alg:IG}, in which we first train feature extractor $\Phi$, classifier $w$ and hard mask $m$ in the first environment,
and then only update $m$ (with fixed $\Phi$ and w) in subsequent environments.


The \emph{soft mask} is also beneficial when observing environments sequentially; indeed, in accordance with our causal model Fig.~\ref{fig:causal_model}, we only need to identify task-relevant features and remove the spurious correlations, i.e. the features that are associated with
the current environment. After training on the first environment, we thus fix the feature extractor $\Phi$ and classifier $w$ networks' parameters and only optimize Eq.~(\ref{eq:IB_multi_entropy}), with respect to the gating trainable vector $g$, where the gating function is defined in Eq.~(\ref{eq:gate}).



\subsection{Entropy Estimation}
We use the matrix-based R{\'e}nyi's $\alpha$-order entropy functional to estimate the entropy $H(z)$,
which is mathematically well defined and computationally efficient for large networks. Specifically, given $n$ samples drawn from $p(x)$, i.e., $\{\mathbf{x}_{i}\}_{i=1}^{n}\in \mathcal{X}$, each $\mathbf{x}_i$ can be a real-valued scalar or $d$-dimensional vector, it estimates the entropy on the eigenspectrum of a Gram matrix $K\in \mathbb{R}^{n\times n}$ ($K_{ij}=\kappa(\mathbf{x}_{i}, \mathbf{x}_{j})$ and $\kappa$ is a Gaussian kernel) as follows~\cite{giraldo2014measures}:
\begin{equation}\label{Renyi_entropy}
H_{\alpha}(A)=\frac{1}{1-\alpha}\log_2 \left(\tr {A^{\alpha}}\right)=\frac{1}{1-\alpha}\log_{2}\left(\sum_{i=1}^{n}\lambda _{i}(A)^{\alpha}\right),
\end{equation}
where $\alpha\in (0,1)\cup(1,\infty)$. $A$ is the normalized version of $K$, i.e., $A=K/\tr{K}$. $\lambda _{i}(A)$ denotes the $i$-th eigenvalue of $A$.

\subsection{Discrete gating function}
When in Eq.~(\ref{eq:gate}) we use a non-linear, but differentiable function $\sigma$, it's gradient can be automatically computed. The output of the function in this case is a continuous value in $[0,1]$.
If we want the value to assume discrete values, we can use for example the indicator function $\sigma(g) = 1_{g \ge 0}$. This function allows us to only consider the discrete values $\{0,1\}$, forcing the network to decide if the neuron is active or inactive.  This condition may be more preferable if we want to have a hard decision on which features to keep and which to remove, for example when observing a new environment. The gradient for this class of functions is zero almost everywhere. We need then to resort to gradient estimators.
In this case, we compute the gradient using the Straight Through estimator \citep{bengio2013estimating}.

\begin{algorithm}
	\begin{enumerate}
	    \item {\bf Input}: Training samples $D^e$ \;
	    \item {\bf Hyper-parameters}: $\lambda$: the lagrangian parameter, $\tau$: the hard gate threshold, $L$: the update frequency, $\underbar{m}$: the minimum number of non-zero entry, $\sigma$: the non-linear function, $H$: the entropy function, $R^e$: the risk objective for the $e$-th environment,  \;
	\item {\bf GIB training on the first Environment}: \;
	\begin{enumerate}
	    \item $m \gets \mathbbm{1}$, $\{g,\Phi,w\} \gets $Initialize()
	    \item Train feature extractor $\Phi$, gate $g$ and classifier $w$ networks' parameters with the objective defined in Eq.~(\ref{eq:IB_multi_entropy}),
	    the gating function is defined in Eq.~(\ref{eq:gate});
	    \;
	    \item Progressively update the hard mask $m$ with Eq.~(\ref{eq:guillotine}). \;
	\end{enumerate}
	\item {\bf Information Projection for the subsequent Environments}: \;
	\begin{enumerate}
	    \item Fix feature extractor $\Phi$ and classifier $w$
	    parameters\;
	    \item
	    Minimize objective defined in Eq.~(\ref{eq:IB_multi_entropy}) w.r.t. $g$, where the gating function is defined in Eq.~(\ref{eq:gate}); \;
	    \item Progressively update the hard mask $m$ with Eq.~(\ref{eq:guillotine})
	\end{enumerate}	
	\item {\bf Output}: Network $w,\Phi$ and gate parameters $m,g$ \;
	\end{enumerate}
	\vspace{4mm}
	\caption{Gated Information Bottleneck (GIB)
	}
	\label{alg:IG}
\end{algorithm}

\section{Generalization and Adversarial Robustness}
We evaluate our Gated Information Bottleneck (GIB) in terms of generalization, robustness to adversarial attack, and out-of-distribution data detection on benchmark datasets: MNIST, FashionMNIST, and CIFAR-10. The purpose is to show the superiority of GIB over existing neural network-based IB variants, which also offers an empirical justification why GIB is able to generalize well to unseen environments, as will be tested in the next section.

\subsection{Robustness to Adversarial Attacks}
Different types of adversarial attacks have been recently proposed to ``fool" models by adding small carefully designed perturbations. In this section, we use two commonly used attacks to exam the adversarial robustness: the Fast Gradient Sign Attack (FGSM)~\cite{goodfellow2014explaining}, and the Projected Gradient Descent (PGD)~\cite{madry2017towards} which obtains adversarial examples by a multi-step FGSM (we set $\gamma =0.1$ and $t=5$ for PGD).

We provide adversarial robustness results on MNIST. We randomly select $10k$ images from the training set as the validation set for hyper-parameter tuning. For a fair comparison, we use the same architecture that has been adopted in~\cite{alemi2016deep}, namely a MLP with fully connected layers of the form $784-1024-1024-256-10$, and ReLU activation. The bottleneck layer is the one before the softmax layer, i.e., the hidden layer with $256$ units.
The Adam optimizer is used with an initial learning rate of 1$e$-4 and exponential decay by a factor of $0.97$ every $2$ epochs. All models are trained with $200$ epochs with mini-batch of size $100$.
The classification accuracy in test set is shown in Fig.~\ref{fig:MNIST_attack}.
As can be seen, our GIB outperforms other IB approaches for both types of attack. This result indicates that our method can retain more task relevant information while performing features compression.

\begin{figure}[htbp]
	\setlength{\abovecaptionskip}{0pt}
	\setlength{\belowcaptionskip}{0pt}
	\centering
	\subfigure[FGSM]{
		\includegraphics[width=0.23\textwidth]{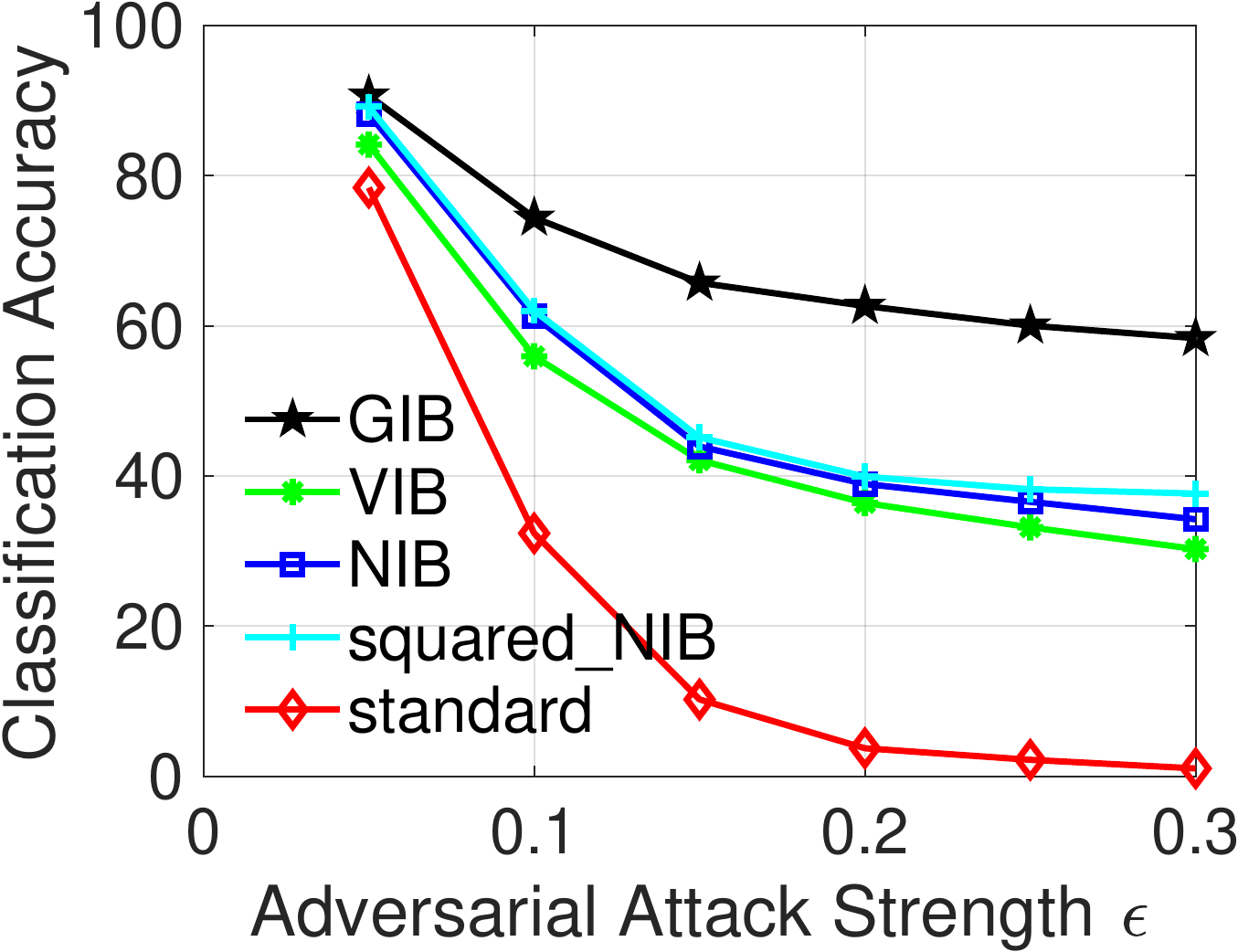}}
	\subfigure[PGD]{
		\includegraphics[width=0.23\textwidth]{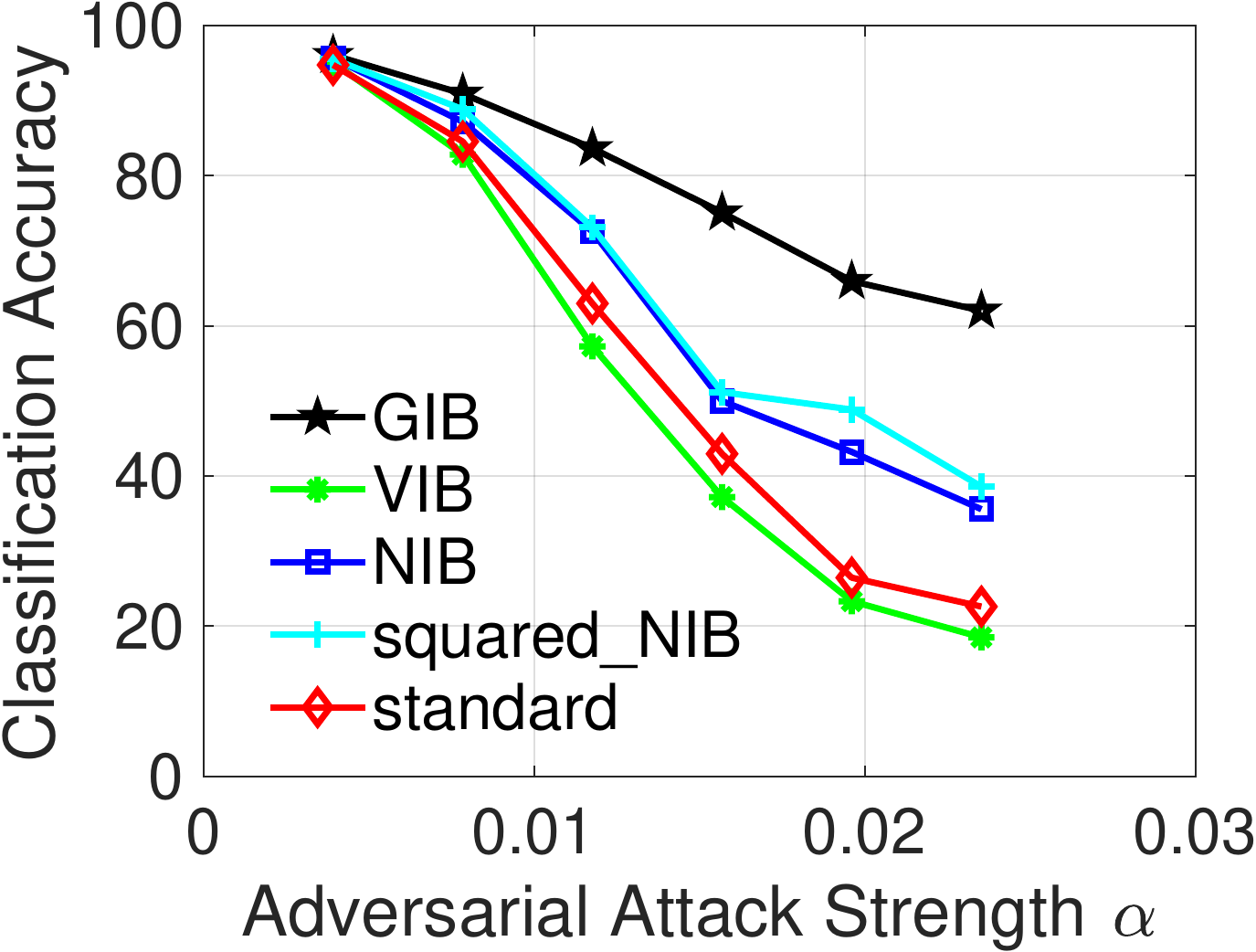}}
	\caption {Test accuracy of different IB approaches with (a) FGSM and (b) PGD on MNIST.}
	\label{fig:MNIST_attack}
\end{figure}

\begin{figure}[htbp]
	\setlength{\abovecaptionskip}{0pt}
	\setlength{\belowcaptionskip}{-5pt}
	\centering
	\includegraphics[width=0.4\textwidth]{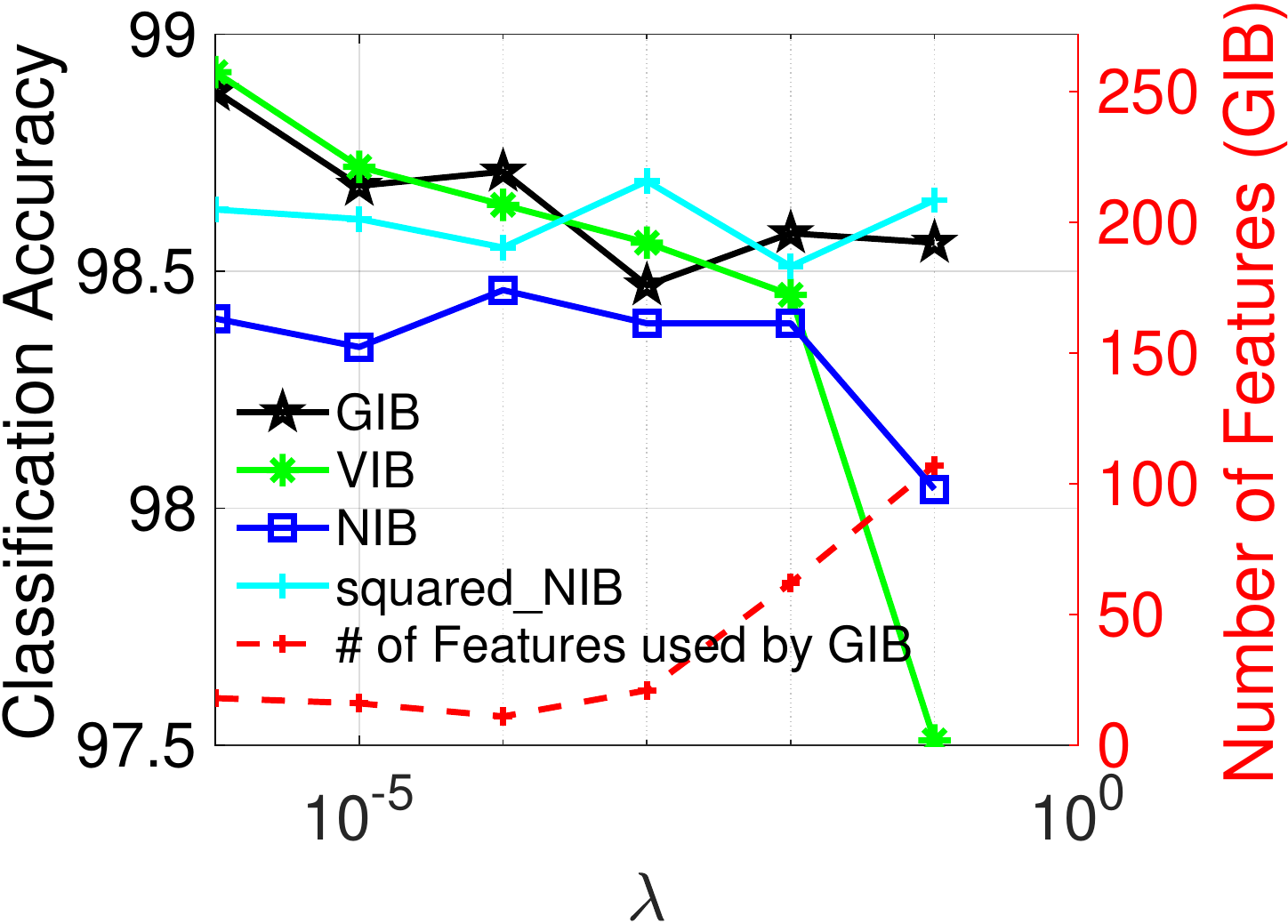}
	\caption {Test accuracy of GIB and VIB on CIFAR-10 with respect to different values of $\beta$. The dashed black curve plots the number of features used by GIB (after the masking). GIB consistently achieves the compelling classification accuracy and effectively select only a small subset of features, while other IB approaches retain $256$ features.}
	\label{fig:CIFAR_attack}
\end{figure}

\subsection{Out-of-Distribution (OOD) Detection}


DNNs are prone to rely heavily on spurious correlations and annotation artifacts present in the training data,
while OOD examples are unlikely to contain the same spurious patterns as in-distribution examples. This phenomenon makes high-capacity DNNs are more likely to classify out-of-distribution inputs from unknown classes into known classes with high confidence. In this section, we exam our model in terms of the OOD detection and compare with other existing IB approaches.

For the problem of detecting OOD samples, we train a MLP for classifying FashionMNIST dataset. The dataset (FashionMNIST) used for training is the in-distribution (positive) dataset and the others (MNIST) are considered as OOD (negative). For evaluation, we first train a threshold based binary classifier with the confidence score of the training set, and then classify the test samples as in-distribution if the confidence score is above the threshold. Following the baseline method~\cite{hendrycks2016baseline}, we define a confidence score as the maximum value of the posterior distribution, i.e., the output of the softmax layer. We measure the following metrics: the area under the receiver operating characteristic curve (AUROC), the area under the precision-recall curve (AUPR), the false positive rate (FPR) at $95\%$ true positive rate (TPR) and detection accuracy. The metric AUPR-In and AUPR-Out denote area under the precision-recall curve where in-distribution and out-of-distribution samples are specified as positives, respectively.

The results are summarized in Table~\ref{table_OOD_detection}. Our method outperforms existing neural network-based IB approaches, which suggests that our model is more sensitive to samples that do not follow the training data distribution.

\begin{table}[h]
 \centering
  \fontsize{8}{8}\selectfont
\begin{threeparttable}
    \begin{tabular}{ccccccc}
    \toprule
    Methods & Standard & VIB & NIB & squared NIB & GIB \cr
    \midrule
    AUROC$\uparrow$ & $90.2$ & $90.6$ & $91.6$ & $92.1$ & $\bf{93.0}$ \cr
    AUPR In$\uparrow$ & $93.3$ & $92.1$ & $93.1$ & $93.1$ & $\bf{94.5}$ \cr
    AUPR Out$\uparrow$ & $90.5$ & $90.3$ & $91.3$ & $\bf{91.5}$ & $91.3$ \cr
    Detection Acc$\uparrow$ & $83.2$ & $83.2$ & $84.1$ & $84.1$ & $\bf{86.9}$ &\cr
   FPR ($95\%$ TPR)$\downarrow$ & $49.6$ & $48.0$ & $49.3$ & $49.4$ & $\bf{47.9}$ \cr
\bottomrule
\end{tabular}
\end{threeparttable}
\caption{AUROC$\uparrow$, AUPR In$\uparrow$, AUPR Out$\uparrow$, Detection Acc$\uparrow$ and FPR ($95\%$TPR)$\downarrow$ for detecting OOD inputs with density ratio and other baselines on Fashion-MNIST vs. MNIST datasets. $\uparrow$ indicates that larger value is better, $\downarrow$ indicates that lower value is better.}
\label{table_OOD_detection}
\end{table}

\section{Generalization in Sequential Environments}
{
\color{black} While DNN shows impressive performance in single environment, when presenting multiple environments, the training mechanism, including the loss, and the network architecture highly influence the performance in the testing environment. For example, when presenting  environments sequentially, DNNs are affected by catastrophic forgetting. While various methods have been proposed, if we do not consider memory reply based systems or generative models, even causal mechanisms, like IRM, fail to capture invariant models. In order to evaluate the performances with sequential environments, we present two experimental setups.
}
\begin{figure}
\centering
	\includegraphics[width=0.4\linewidth, trim=0cm 0cm 0cm 0cm, clip]{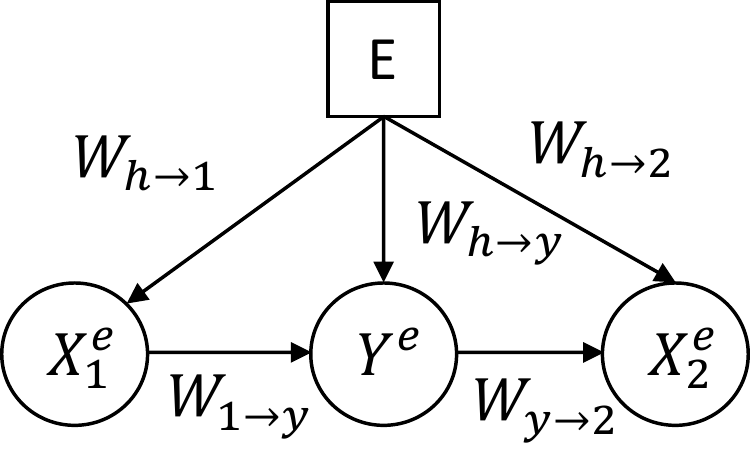}
	\caption{Synthetic experiments, defined by the matrices $W_*$ and the observed variables are $X_1^e,X_2^e$.}
	\label{fig:synthetic_causal}
\end{figure}
\subsection{Datasets and experimental setups}
\paragraph{Synthetic Experiment}
Following \citep{arjovsky2019invariant}, we consider the synthetic experiment, whose aim is to identify the invariant model, but here the environments are observed sequentially. Fig.~\ref{fig:synthetic_causal} describes the causal model of the synthetic experiment, where $X_1^2,X_2^e$ are the observed variables, while $Y^e$ are the labels and $W_{h \to 1,2,y},W_{1 \to y},W_{y \to 1}$ are gaussian variables. The first $d=10$ dimensions of the model are the causal variables, while the last $d=10$ are the non-causal variables.
\begin{table}[]
    \centering
    \begin{tabular}{r|l|l}
Method	& Causal Error 	&	Non Causal Error \\	
\midrule
SEM (ground true)& 	0.00 $\pm$	0.00	& 0.00 $\pm$	0.00 \\
\midrule
SERM    & 	92.0 $\pm$ 1.9	& 92.9	 $\pm$ 1.6 \\
SIRM    &   66.2 $\pm$ 1.6	& 65.5	 $\pm$ 1.0 \\
GIB     & 	{\bf 13.7} $\pm$ 0.4	& {\bf 42.5} $\pm$ 0.7 \\
\bottomrule
    \end{tabular}
    \caption{Sequential Synthetic Dataset, Causal and non causal Errors, over $5$ repetitions; comparing with IRM and ERM computed sequentially.}
    \label{tab:synthetic}
\end{table}


\begin{figure}
\centering
	\includegraphics[width=0.2
    \textwidth, trim=.1cm .1cm .1cm 1cm, clip]{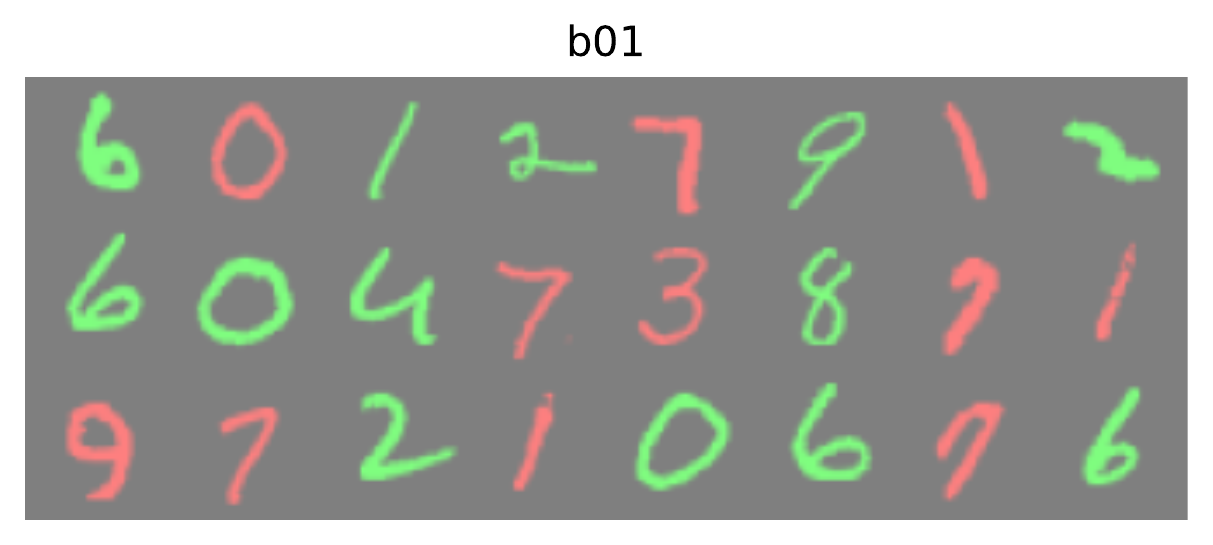}
	\caption{The Colored MNIST dataset with color correction ({\tt b01}).}
	\label{fig:color_mnist}
	\includegraphics[width=0.5\linewidth, trim=3cm 0cm 3cm 0cm, clip]{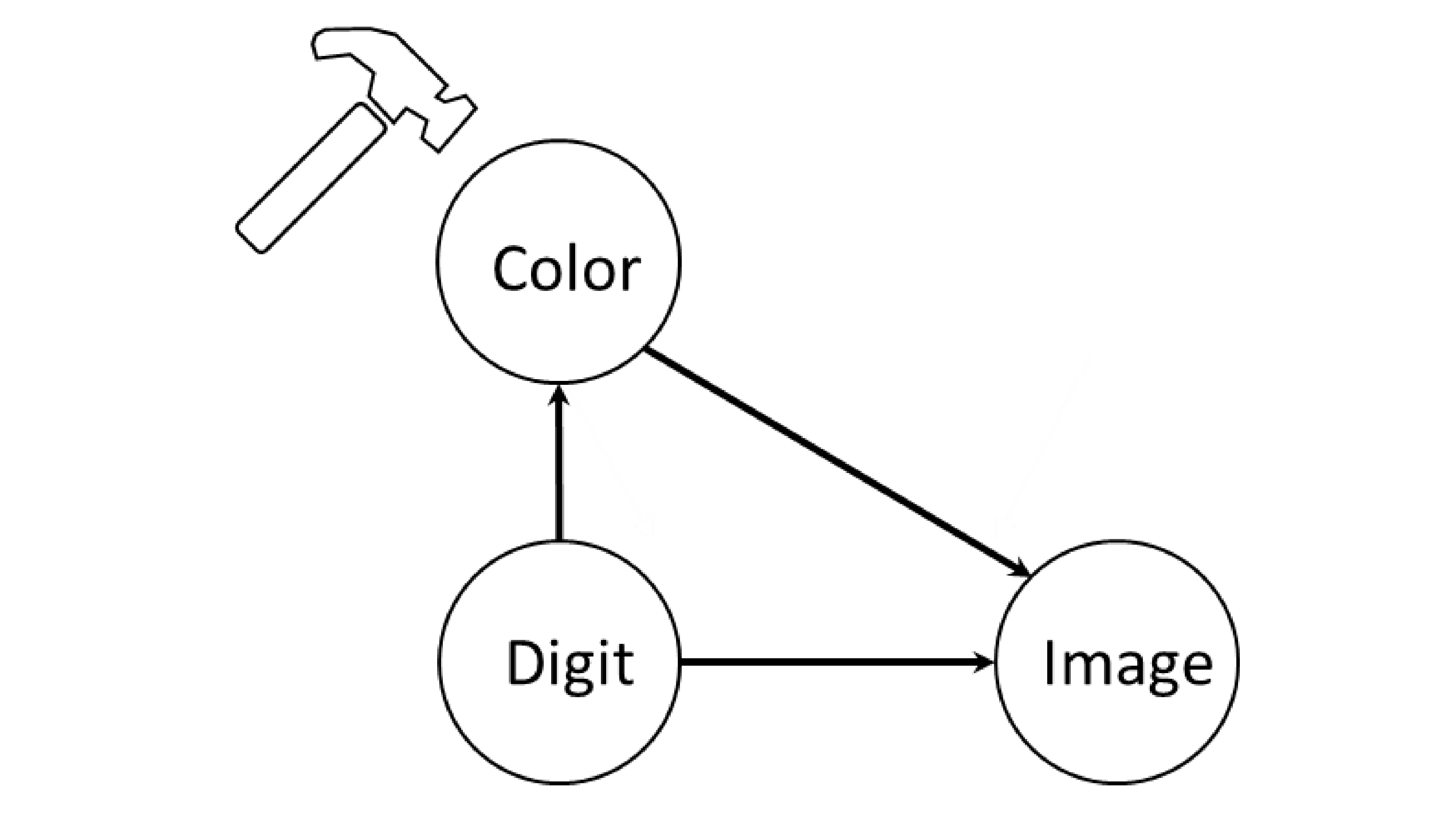}
    \caption{\label{fig:experiment_causal_model} Causal relationships of colored MNIST.}
\end{figure}

\paragraph{Colored MNIST, Fashion MNIST, KMNIST, EMNIST}
Fig.~\ref{fig:color_mnist} shows samples from the Colored MNIST dataset. In each training environment,
the task is to classify whether the digit is, respectively, even or odd. As in prior work, we add noise to the preliminary label by randomly flipping it with a probability of $p_l=0.25$. The color of the image is defined by the variable $z$, which is the noisy label flipped with probability $p_c \in [0.4,0.1]$. When multiple environments are considered, a linear probability is used in each environments, i.e., $p_c(i) = 0.4 - i 0.3/(n-1)$, where $i$ is the environment index and $n$ the total number of environments.
The color of the digit is green if $z$ is even and red if $z$ is odd. Each train environment contains $5,000$ images of size $28 \times 28$ pixels, while the test environment contains $10,000$ images where the probability $p_c = 0.9$. The color of the digit
is thus generated from the label, but depends on the environment. Fig.~\ref{fig:experiment_causal_model} depicts the causal graph (the hammer indicating the effect of the intervention of the environment). The variable ``Color" is inverted when moving from the training to test environment.

Similar to MNIST, in the Fashion-MNIST dataset, the $z$ variable defines the back-ground color and labels are assigned according to: $y = 0$ for ``t-shirt", ``pullover", ``coat", ``shirt", ``bag" and $y = 1$ for ``trouser", ``dress", ``sandal", ``sneaker", ``ankle boots". The variable $y$ is flipped with $25\%$ probability to construct the final label.
Besides, we also consider Kuzushiji-MNIST dataset~\citep{clanuwat2018deep}\footnote{\url{https://github.com/rois-codh/kmnist}} and the EMNIST Letters dataset~\citep{cohen2017emnist}\footnote{\url{https://www.nist.gov/itl/products-and-services/emnist-dataset}}. The former includes $10$ symbols of Hiragana, whereas the latter contains $26$ letters in the modern English alphabet. For EMNIST, there are $62,400$ training samples per environment and $20,300$ test samples. We set $y=0$ for letters `a', `c', `e', `g', `i', `k', `m', `o', `q', `s', `u', `v', `y' and $y=1$ for remaining ones.
The ideal model with this setup would have performance of $75\%$ on both training and test environments.


\subsection{Reference Methods}
We compare with a set of popular reference methods in order to show the advantage of proposed method
in learning invariant models when observing environment sequentially.
\textbf{ERM} is the classical empirical risk minimization method; we always use the cross-entropy loss.
\textbf{IRMv1} enforces the gradient of the model with respect to a scalar to be zero.
\textbf{IRMG} models the problem as a game among environments, where each environment learns a separate model.
For completeness, we also evaluate the performances of three reference continual learning methods. These include Elastic Weight Consolidation (EWC~\citep{kirkpatrick2017overcoming}), Gradient Episodic Memory (GEM~\citep{lopez-paz_gradient_2017})\footnote{\url{https://github.com/facebookresearch/GradientEpisodicMemory}}, Meta-Experience Replay (MER~\citep{riemer2018learning})\footnote{\url{https://github.com/mattriemer/mer}}.
\textbf{EWC} imposes a regularization cost on the parameters that are relevant to the previous task, where the relevance is measured by Fisher Information (FI); \textbf{GEM} uses episodic memory and computes the updates such that accuracy on previous tasks is not reduced, using gradients stored from previous tasks;
\textbf{MER} uses an efficient replay memory and employs the meta-learning gradient update to obtain a smooth adaptation among tasks.

\begin{table*}[]
	\centering
	\small
	\fontsize{8}{8}\selectfont
\begin{tabular}{l|ll|ll|ll}
	\toprule
	Dataset & \multicolumn{2}{c}{FaMNIST} & \multicolumn{2}{c}{KMNIST} & \multicolumn{2}{c}{EMNIST}\\
	\midrule
	Method &  train acc & test acc & train acc & test acc & train acc & test acc\\
	\midrule
    EWC &     82.7\% (0.5\%) &    24.4\%  (1.1\%) & 82.7\% (0.4\%) &    21.1\%  (1.3\%)  & 82.7\% (0.2\%) &   21.1\% (0.6\%) \\
    GEM & 82.4\% (0.3\%) &    24.5\% (1.6\%) & 83.1\% (0.4\%) &    20.8\% (1.5\%) & 82.9\% (0.6\%) &    21.3\% (0.6\%) \\
    MER & 78.7\% (0.4\%) &    19.6\% (2.1\%) & 80.9\% (0.5\%) &    20.1\% (1.7\%) &  78.8\% (1.0\%) &    19.3\%  (1.9\%) \\
    ERM & 82.7\%   (0.5\%) &    24.4\%   (1.2\%)  & 82.7\%  (0.4\%) &    21.1\%  (1.3\%) & 82.7\%   (0.2\%) &    21.1\% (0.6\%) \\
    IRM & 82.7\% (0.5\%) &    24.1\% (1.0\%) & 82.7\% (0.4\%)&    21.1\% (1.3\%) & 82.7\% (0.2\%) &    21.1\%  (0.6\%) \\
    IRMG & 84.0\%  (0.8\%) &    26.4\%  (1.4\%) & 84.3\% (0.1\%) &    23.9\%  (1.2\%)& 83.8\% (0.8\%) &    23.9\% (0.6\%)\\
    GIB & 79.9\% (5.1\%) &    {\bf 55.1\%}  (3.8\%) & 65.3\% (12.4\%) &    {\bf 47.5\%} (3.7\%) & 66.2\% (1.4\%) &{\bf 49.3\%} (2.4\%) \\
    \bottomrule
\end{tabular}
    \caption{Sequential Colored FashionMNIST, KMNIST and EMNIST Datasets with $2$ consecutive environments, over $3$ evaluations, where the probability of the first and last environments are $40\%,10\%$.}
  \label{tab:coloredMNISTseqDatasets}
\end{table*}

\begin{table*}[]
	\centering
	\small
	\fontsize{8}{8}\selectfont
\begin{tabular}{l|ll|ll|ll}
	\toprule
	Number Env. & \multicolumn{2}{c}{2} & \multicolumn{2}{c}{4} & \multicolumn{2}{c}{6} \\
	\midrule
	Method &  train acc & test acc & train acc & test acc & train acc & test acc\\
	\midrule
    EWC &     83.0\%   (0.6\%) &    24.4\%  (1.3\%) & 80.0\%  (0.4\%) &    24.7\%  (0.9\%)  & 78.9\% (0.6\%) &    22.9\% (0.3\%) \\
    GEM & 83.0\% (0.4\%) &    24.9\% (1.2\%) & 80.2\% (0.5\%) &    24.6\% (1.3\%) & 79.1\% (0.6\%) &    23.8\% (0.5\%) \\
    MER & 78.0\% (1.0\%) &    24.4\% (3.0\%) & 77.2\% (0.6\%) &    22.8\% (2.4\%) & 76.5\% (0.6\%) &    24.5\% (3.4\%) \\
    ERM & 83.0\% (0.6\%) &    24.4\% (1.3\%) & 80.1\% (0.4\%) &    24.7\% (0.8\%) & 78.9\%  (0.6\%) &    22.9\%  (0.3\%) \\
    IRM & 83.1\%  (0.6\%)  & 24.8\%  (1.3\%) & 74.8\% (0.6\%) &    16.5\% (4.3\%) & 74.7\% (0.7\%) &    13.9\% (4.6\%) \\
    IRMG & 83.8\% (0.6\%) &    26.7\% (0.6\%) & 79.4\% (0.2\%) &    28.1\% (0.9\%) & 77.2\% (0.4\%) &    27.9\% (0.5\%)\\
    GIB & 75.7\% (2.6\%) &    {\bf 55.1\%} (2.1\%) & 66.1\% (11.3\%) &    {\bf 52.9\%} (2.9\%) & 69.0\% (1.8\%) &    {\bf 53.3\%} (1.8\%) \\
    \bottomrule
\end{tabular}
    \caption{Sequential Colored MNIST Dataset in $2,4,6$ consecutive environments, over $3$ evaluations, where the probability of the first and last environments are $40\%,10\%$.}
  \label{tab:coloredMNISTseqEnvironments}
\end{table*}

\subsection{Experimental results}
\paragraph{Synthetic Dataset}
Table~\ref{tab:synthetic} presents the Causal and Non-Causal Error on the Synthetic experiments. We notice that, as expected, in the sequential environments, ERM and IRM methods are not able to capture invariant models. By enforcing consistency between the two consecutive environments, the proposed approach is able to partially recover the model components, by reducing the error from $66\%$ to $14\%$ in the causal components, while reducing from $66\%$ to $43\%$ in the non-causal components.
\paragraph{Colored Dataset}
Table~\ref{tab:coloredMNISTseqDatasets} shows the accuracy of recovering labels with two consecutive environments, evaluated in different datasets. While the hyper-parameters were selected for the MNIST datasets, the performance on the Fashion MNIST, KMNIST and EMNIST show consistent results. The training accuracy, evaluated in the two environments, is between $65\%$ and $80\%$, whereas the testing performance are between $48\%$ and $55\%$, considerably outperforming the other approaches.

Table~\ref{tab:coloredMNISTseqEnvironments} shows the accuracy in the train and test phases, when the number of environments increases from $2$ to $4$ and $6$. We notice that IRM performance reduces as the number of environments increases, while the IRMG slightly improves. The proposed approach shows a considerable advantage, especially in the test environments with values around $53\%-55\%$, while still maintaining good performance in the training environments around $75\%-66\%$.

\section{Related Works}
\subsection{Information Bottleneck: From Theory to Applications}

The IB principle has both theoretical and practical impacts to DNNs. Theoretically, it was argued that, even though the IB objective is not explicitly optimized, DNNs trained with cross-entropy loss and stochastic gradient descent (SGD) inherently solve the IB compression-prediction trade-off~\cite{tishby2015deep,shwartz2017opening}.
However, this observation is still under debate, and different mutual information estimators may lead to different behaviors regarding the existence of compression~\cite{zaidi2020information,goldfeld2020information}.

Practically, IB can be formulated as a learning objective for deep models. When parameterizing IB with a DNN, $X$ denotes input variable, $Y$ denotes the desired output (e.g., class labels), $T$ refers to the latent representation of one hidden layer. In a typical classification setup, this was done by optimizing the IB Lagrangian (i.e., Eq.~(\ref{eq:IB_Lagrangian})) via a cross-entropy loss (which amounts to $\max I(Y;T)$~\cite{achille2018information,amjad2019learning}) regularized by a differentiable mutual information term $I(X;T)$~\cite{yuinformation}. Depends on implementation details, $I(X;T)$ can be measured by the variational approximation as in the variational IB (VIB)~\cite{alemi2016deep},
the pairwise distance based estimator~\cite{kolchinsky2017estimating} as in the nonlinear IB (NIB)~\cite{kolchinsky2019nonlinear}, the mutual information neural estimator (MINE)~\cite{belghazi2018mutual} as in~\cite{elad2019direct}.



On the other hand, variants of the original IB objective have been developed. Deterministic Information Bottleneck~\cite{strouse2017deterministic} and Conditional Entropy Bottleneck (CEB)~\cite{fischer2020conditional} replace the mutual information term $I(X;Z)$ with, respectively, the entropy $H(Z)$ or the conditional mutual information $I(X;Z|Y)$. Squared NIB~\cite{kolchinsky2018caveats} modifies the IB objective by squaring the mutual information term $I(X;Z)$. The most similar variant to us is the recently proposed Drop-Bottleneck~\cite{kim2021drop} for reinforcement learning. Although Drop-Bottleneck also aims to discretely discard redundant features that are irrelevant to tasks, it relies on a strong and over-optimistic assumption that all dimensions of the latent representation are independent. We justified the weakness of this assumption and elaborated our differences to Drop-Bottleneck in the Appendix.


\subsection{Generalization from a Causal Perspective}


Incorporating the machinery of causality into learning models is a recent trend for improving generalization. \citep{bengio2019meta} argued that causal models can adapt to sparse distributional changes quickly and proposed a meta-learning objective
that optimizes for fast adaptation. IRM, on the other hand, presents an optimization-based formulation to find non-spurious actual causal factors to target $y$. Extensions of IRM include IRMG and the Risk Extrapolation (REx)~\citep{krueger2020out}.
Our work's motivation is similar to that of online causal learning~\citep{javed2020learning}, which models the expected value of target $y$ given each feature as a Markov decision process (MDP) and identifies the spurious feature $x_i$ if $\mathbb{E}[y|x_i]$ is not consistent to temporally distant parts of the MDP. The learning is implemented with a gating model and behaves as a feature selection mechanism and, therefore, can be seen as learning the support of the invariant model. The proposed solution, however, is only applicable to binary features and assumes that the aspect of the spurious variables is known (e.g. color). It also requires careful parameter tuning.

Independent to our work, \cite{ahuja2021invariance} also investigated the connection between IB theory and causal representation learning. Different from \cite{ahuja2021invariance} that combines the objectives of IB and IRM, our work suggests that IB alone enables the learning of invariant causal representation, and thereby improves generalization.




\section{Conclusion}

We developed gated information bottleneck (GIB), a new IB approach that directly minimizes the entropy of features selected by a trainable \emph{soft mask}. GIB is easy to optimize and encourages the learning of invariant representations over different environments. We demonstrated that GIB is more robust to adversarial attack and is more effective to distinguish in-distribution and out-distribution data than popular neural network-based IB approaches, while it also significantly reduces the feature dimension for inference. Further, we also showed that GIB is able to discover invariant causal correlations in a sequential environments scenario, which outperforms benchmark causal representation learning and continual learning methods with a large margin.

\bibliographystyle{IEEEtran}

{\fontsize{10}{11}\selectfont
\bibliography{main_ICDM}
}



\appendix


\subsection{Proofs to Lemma~\ref{th:kl_z_env_minimal} and Lemma~\ref{th:mutual_information_env_minimal}}

Before providing proofs to Lemma~\ref{th:kl_z_env_minimal} and Lemma~\ref{th:mutual_information_env_minimal}, let us first introduce the following three lemmas.

\begin{lem} \label{th:kl}
Given two random variables $X,Z$, whose joint distribution is given by $p(z,x)$ and conditional distribution $p(z|x)$, the KL divergence between $p(z|x)$ and $p(z)$ is upper bounded by:
\begin{align}
    \KL{ p(z|x)||p(z)}   & \le \E_{x'} \KL{ p(z|x)||p(z|x')}
\end{align}
\end{lem}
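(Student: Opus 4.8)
The plan is to prove the inequality by writing the gap between the two sides as a single integral and showing it is nonnegative via Jensen's inequality. The central fact I would exploit is that the marginal is the mixture of the conditionals, $p(z)=\E_{x'}[p(z|x')]=\int p(z|x')p(x')\dd{x'}$, so the two quantities differ only in how they treat the ``reference'' distribution that appears inside the logarithm of each KL term.

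First I would expand the right-hand side by pulling the $x'$-expectation inside the $z$-integral, which yields
\begin{align*}
    \E_{x'}\KL{p(z|x)||p(z|x')} &= \int p(z|x)\Bigl[\log p(z|x) - \E_{x'}\log p(z|x')\Bigr]\dd{z}.
\end{align*}
Writing the left-hand side analogously as $\int p(z|x)\bigl[\log p(z|x)-\log p(z)\bigr]\dd{z}$, the $\log p(z|x)$ contributions cancel upon subtraction, leaving
\begin{align*}
    \E_{x'}\KL{p(z|x)||p(z|x')} - \KL{p(z|x)||p(z)} &= \int p(z|x)\Bigl[\log p(z) - \E_{x'}\log p(z|x')\Bigr]\dd{z}.
\end{align*}

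It then remains to verify that the bracketed term is nonnegative for every fixed $z$. Since $p(z)=\E_{x'}[p(z|x')]$ and $\log$ is concave, Jensen's inequality gives $\log \E_{x'}[p(z|x')] \ge \E_{x'}\log p(z|x')$, which is exactly $\log p(z) - \E_{x'}\log p(z|x') \ge 0$. Because $p(z|x)\ge 0$, the entire integrand is nonnegative, so the displayed difference is nonnegative and the claimed bound follows. The argument is short and the only substantive step is recognizing this Jensen structure; the main thing to watch is the interchange of the $z$- and $x'$-integrals (justified by nonnegativity of the integrand, i.e. Tonelli's theorem), so I do not anticipate a genuine obstacle beyond carefully tracking which distribution sits inside each logarithm.
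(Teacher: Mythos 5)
Your proof is correct and follows essentially the same route as the paper's: both hinge on writing the marginal as the mixture $p(z)=\E_{x'}[p(z|x')]$ and applying Jensen's inequality (concavity of the logarithm) to get $\log p(z)\ge \E_{x'}\log p(z|x')$, the paper doing this inside the cross-entropy term while you phrase it as nonnegativity of the difference of the two sides. The only cosmetic caveat is your appeal to Tonelli: the pointwise KL integrand $p(z|x)\log\bigl(p(z|x)/p(z|x')\bigr)$ is not nonnegative, though this is immaterial in the discrete setting the paper works in and your final integrand is nonnegative after the cancellation.
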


\begin{proof}[Proof of Lemma~\ref{th:kl}]
Indeed, we have:
\begin{align*}
& \KL{ p(z|x)||p(z)}  = \sum_{z} p(z|x) \ln \frac{p(z|x)}{p(z)} \\
    & =  \sum_{z} p(z|x) \ln p(z|x)  - \sum_{z} p(z|x) \ln {p(z)} \\
    & = H(z|x) - \sum_{z} p(z|x) \ln {\sum_{x'} p(z|x') p(x')} \\
    & \le H(z|x) - \sum_{x'} p(x') \sum_{z} p(z|x) \ln { p(z|x') } \\
    & = \sum_{z} p(z|x) \ln p(z|x)
    - \sum_{x'} p(x') \sum_{z} p(z|x) \ln { p(z|x') } \\
    & = \sum_{x'} p(x') \left[ \sum_{z} p(z|x) \ln p(z|x) \right.
    \left. - \sum_{z} p(z|x) \ln { p(z|x') }  \right] \\
    & = \sum_{x'} p(x') \left[ \sum_{z} p(z|x) \frac{\ln p(z|x)}{p(z|x') }  \right] \\
    & = \E_{x'} \KL{ p(z|x)||p(z|x')},
\end{align*}
where we used $p(z) = \sum_{x'} p(z|x') p(x') $ and $\ln {\sum_{x'} p(z|x') p(x')} \ge \sum_{x'} p(x') \ln { p(z|x') }$.
\end{proof}


\begin{lem} \label{th:kl_env}
Given two random variables $X,Z$, with two joint distributions given by $p^1(z,x),p^2(z,x)$ and conditional distributions $p^1(z|x),p^2(z|x)$, the KL divergence between $p^1(z|x)$ and $p^2(z)$ is upper bounded by:
\begin{align}
    \KL{ p^1(z|x)||p^2(z)}   & \le \E_{x'} \KL{ p^1(z|x)||p^2(z|x')}
\end{align}
\end{lem}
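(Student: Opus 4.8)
The plan is to mirror the proof of Lemma~\ref{th:kl} almost verbatim, the only change being that the two conditional distributions and the reference marginal now carry distinct environment superscripts. The engine of the argument is the marginalization identity $p^2(z) = \sum_{x'} p^2(z|x')\, p^2(x')$, which lets me replace the reference marginal $p^2(z)$ inside the logarithm by an average of conditionals over the second environment, and then apply Jensen's inequality (concavity of $\ln$) to move the expectation outside the logarithm.

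First I would expand the target divergence as
\begin{align*}
\KL{ p^1(z|x)||p^2(z)} = \sum_z p^1(z|x)\ln p^1(z|x) - \sum_z p^1(z|x)\ln p^2(z),
\end{align*}
keeping the (negative) entropy term $\sum_z p^1(z|x)\ln p^1(z|x)$ untouched. Next I substitute $p^2(z)=\sum_{x'}p^2(z|x')\,p^2(x')$ into the second term and invoke concavity of the logarithm, $\ln\sum_{x'}p^2(z|x')\,p^2(x') \ge \sum_{x'}p^2(x')\ln p^2(z|x')$. Because this term enters with a minus sign, the inequality flips into an upper bound. Finally I factor out $\sum_{x'}p^2(x')\,[\cdots]$, recombine the two logarithmic terms into $\ln\frac{p^1(z|x)}{p^2(z|x')}$, and recognize the result as $\E_{x'}\KL{p^1(z|x)||p^2(z|x')}$, where the expectation is taken with respect to $p^2(x')$.

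The argument is essentially routine and I anticipate no genuine obstacle; the only point demanding care is the bookkeeping of which environment's marginal is used. Since the reference marginal is $p^2(z)$, its decomposition must be against $p^2(x')$ (not $p^1(x')$), and consequently the outer expectation $\E_{x'}$ in the statement is implicitly over $p^2(x')$. Keeping the superscripts consistent throughout --- so that the Jensen step averages $p^2(z|x')$ against $p^2(x')$ while the detached entropy term stays attached to $p^1(z|x)$ --- is the one place where a slip would silently break the bound.
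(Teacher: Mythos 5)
Your proof is correct and is essentially identical to the paper's: the paper proves Lemma~\ref{th:kl_env} by repeating the argument of Lemma~\ref{th:kl} with the marginalization $p^2(z)=\sum_{x'}p^2(z|x')\,p^2(x')$ followed by Jensen's inequality, exactly as you propose. The bookkeeping point you flag --- that the outer expectation must be taken with respect to $p^2(x')$, not $p^1(x')$ --- is precisely the one substantive detail the paper's one-line proof emphasizes.
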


\begin{proof}[Proof of Lemma~\ref{th:kl_env}]
The derivation is similar to Lemma.\ref{th:kl}, where we used $p^2(z) = \sum_{x'} p^2(z|x') p^2(x') $.
\end{proof}

Given Lemma~\ref{th:kl} and Lemma~\ref{th:kl_env}, we can now introduce the main inequality in the following Lemma.

\begin{lem} \label{th:mutual_information}
Given two random variables $X,Z$, whose joint distribution is given by $p(z,x)$ and conditional distribution $p(z|x)$, the Mutual Information $I(x,z)$ is upper bounded by:
\begin{align}
    I(x,z)  & \le \E_x \E_{x'} \KL{ p(z|x)||p(z|x')}
\end{align}
\end{lem}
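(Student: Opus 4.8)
The plan is to reduce the statement to Lemma~\ref{th:kl} by way of the standard variational characterization of mutual information. First I would recall the exact identity $I(x,z) = \E_x \KL{p(z|x)||p(z)}$, which follows directly from writing $I(x,z) = \sum_{x,z} p(x)p(z|x)\ln\frac{p(z|x)}{p(z)}$ and recognizing the outer sum over $x$ as an expectation against $p(x)$. This rewrites the mutual information precisely as the average, over the draw of $x$, of the KL divergence between the conditional law $p(z|x)$ and the marginal $p(z)$, so that no approximation is introduced at this step.

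Next, for each fixed $x$, I would invoke Lemma~\ref{th:kl}, which supplies the pointwise bound $\KL{p(z|x)||p(z)} \le \E_{x'}\KL{p(z|x)||p(z|x')}$. Since this inequality holds for every $x$ and both sides are nonnegative, I would apply $\E_x[\cdot]$ to both sides and use monotonicity of the expectation, obtaining $I(x,z) = \E_x \KL{p(z|x)||p(z)} \le \E_x\E_{x'}\KL{p(z|x)||p(z|x')}$, which is exactly the claimed inequality. The argument is therefore essentially a one-line consequence once the identity and Lemma~\ref{th:kl} are in hand.

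I do not anticipate a genuine obstacle here, since the only nontrivial ingredient, Lemma~\ref{th:kl}, is already established (it rests on Jensen's inequality applied to $\ln(\sum_{x'} p(z|x')p(x'))$). The main thing to be careful about is the bookkeeping that guarantees the same marginal $p(x)$ governs both the outer expectation defining $I(x,z)$ and the inner expectation over $x'$ appearing in Lemma~\ref{th:kl}, so that the two averaging operations compose cleanly into the double expectation $\E_x\E_{x'}$ without a hidden mismatch of measures. Verifying that these two uses of $p(x)$ and $p(x')$ are consistent is the only point that warrants an explicit check.
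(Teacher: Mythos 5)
Your proof is correct and matches the paper's own argument exactly: both rewrite $I(x,z)=\E_x \KL{p(z|x)||p(z)}$ and then apply Lemma~\ref{th:kl} under the outer expectation. Your additional remark about checking that the same marginal $p(x)$ governs both expectations is a fair point of care, but it is automatic here since there is only one distribution in play.
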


\begin{proof}[Proof of Lemma~\ref{th:mutual_information}]
Mutual information is the KL divergence between the join and product of marginal distributions
\begin{align}
    I(x,z) &= \KL{p(x,z)||p(x)p(z)}\\
    &= \E_x \KL{p(z|x)||p(z)}.
\end{align}

We thus build an upper bound to the mutual information:
\begin{align}
    I(x,z)  & \le \E_x \E_{x'} \KL{ p(z|x)||p(z|x')},
\end{align}
where we use Lemma~\ref{th:kl}.
\end{proof}

Now we provide proofs to Lemma~\ref{th:kl_z_env_minimal} and Lemma~\ref{th:mutual_information_env_minimal}, respectively.



{\color{black}
\subsubsection{Proof to Lemma~\ref{th:kl_z_env_minimal}}
This inequality can be obtained by expanding the two distributions and using the convexity property of the logarithm:
{\small
 \begin{align*}
     & \KL{ p^1(z) || p^2(z)} = \E_{p^1(z)} \ln \frac{p^1(z)}{p^2(z)} \\
     &=\sum_{z} p^1(z) \ln p^1(z) - \sum_{z} p^1(z) \ln p^2(z) \\
     &=\sum_{x} p^1(x) \sum_{z} p^1(z|x) \ln \sum_{x'} p^1(x') p^1(z|x')
     - \sum_{z} p^1(z) \ln p^2(z) \\
      & \ge  \sum_{x} p^1(x) \sum_{x'} p^1(x') \sum_{z} p^1(z|x) \ln  p^1(z|x')
      - \sum_{z} p^1(z) \ln p^2(z) \\
      & =  \sum_{x} p^1(x) \sum_{x'} p^1(x') \sum_{z} p^1(z|x) \ln  p^1(z|x')
      \\ &~
      - \sum_{x} p^1(x) \sum_{z} p^1(z|x) \ln p^2(z) \\
      & =  \sum_{x} p^1(x) \sum_{x'} p^1(x') \sum_{z} p^1(z|x) \ln  p^1(z|x') \\ &~  - \sum_{x} p^1(x) \sum_{x'} p^1(x') \sum_{z} p^1(z|x) \ln p^2(z) \\
      & =  \sum_{x} p^1(x) \sum_{x'} p^1(x') ( \sum_{z} p^1(z|x) \ln  p^1(z|x')
      - \sum_{z} p^1(z|x) \ln p^2(z) )   \\
      & =  \sum_{x} p^1(x) \sum_{x'} p^1(x')  ( \sum_{z} p^1(z|x) \ln  p^1(z|x') \\ &~  - \sum_{z} p^1(z|x) \ln \sum_{x''} p^2(x'') p^2(z|x'') )   \\
      & \approx  \sum_{x} p^1(x) \sum_{x'} p^1(x')  ( \sum_{z} p^1(z|x) \ln  p^1(z|x') \\ &~  - \sum_{x''} p^2(x'') \sum_{z} p^1(z|x) \ln  p^2(z|x'') )    \\
      & =  \sum_{x} p^1(x) \sum_{x'} p^1(x') \sum_{x''} p^2(x'') \left[ \sum_{z} p^1(z|x) \ln  p^1(z|x') \right.  \\ &~  \left. - \sum_{z} p^1(z|x) \ln  p^2(z|x'') \right]    \\
      & =  \sum_{x} p^1(x) \sum_{x'} p^1(x') \sum_{x''} p^2(x'')  \left[  \sum_{z} p^1(z|x) \ln  \frac{p^1(z|x')} { p^2(z|x'')}  \right]     \\
      & =  \sum_{x} p^1(x) \sum_{x'} p^1(x') \sum_{x''} p^2(x'') \KL{ p^1(z|x') || p^2(z|x'')}     \\
      & =  \E_{x\sim p^1(x),x'\sim p^1(x),x'' \sim p^2(x)}  \KL{ p^1(z|x') || p^2(z|x'')} \\
      & \approx  \sum_{x} p^1(x) \sum_{x''} p^2(x'') \KL{ p^1(z|x) || p^2(z|x'')}     \\
      & =  \E_{x,x'}  \KL{ p^1(z|x) || p^2(z|x')}
 \end{align*}
}where we used $\ln \sum_{x'} p^1(x') p^1(z|x')  \ge \sum_{x'} p^1(x') \ln  p^1(z|x') $,  $\sum_{x'} p^1(x') \sum_{z} p^1(z|x') = p^1(z)$ and $\sum_{x'} p^1(x') = 1$.  In the last approximation we substitute $p^1(z|x') \to  p^1(z|x)$.

\subsubsection{Proof to Lemma~\ref{th:mutual_information_env_minimal}}
Similar to Lemma~\ref{th:mutual_information}, this property follows from Lemma~\ref{th:kl_env}. Indeed, when we consider the cross-domain mutual information between $X,Z$ on the two distributions, we have:
\begin{align}
    I^{12}(X;Z) &= \KL{p^1(z,x)||p^2(z)p^2(x)}\\
     &= \E_x \KL{p^1(z|x)||p^2(z)}   \\
     & \le \E_x \E_{x'} \KL{ p^1(z|x)||p^2(z|x')},
\end{align}
where $\KL{p^1(z,x)||p^2(z)p^2(x)} = \E_x \KL{p^1(z|x)||p^2(z)}$, when $p^2(x)=p^1(x)$ and  $\KL{ p^1(z|x)||p^2(z)} \le \E_{x'} \KL{ p^1(z|x)||p^2(z|x')} $ from Lemma~\ref{th:kl_env}.

}


\subsection{Simulation Details}
\subsubsection{Synthetic Experiment}
In this experiment, the features have fix size of $d = 20$. Hyper-parameter search was done using grid search method over the following values $\lambda \in [0, 1e-5, 1e-4, 1e-3, 1e-2, 1e-1]$,
$5$ repetitions, lr $=1e-4$, 10'000 iterations, 1'000 samples. Experiment performed on a server with $8$ CPUs, 64Gb RAM, one GPU with 8Gb RAM.
\subsubsection{Colored dataset Experiment}
Experiments have been conducted using a MLP with $3$ hidden linear layer with drop-out, each of $200$ neurons and relu activation function. Cross Entropy is used as minimization loss.  Hyper-parameters search based on grid search. For IRM and IRMG the best parameter suggested in the original works have been used. Regularization terms have been selected based on heuristics. For each environment has $5'000$ samples, while test has $10'0000$ samples, trained over $100$  epochs. For MER, EWC and GEM setting are those from the authors, when possible.


\subsection{Analysis of the complexity (time, space, sample size)} The proposed GIB algorithm requires only marginal complexity in time and space. The space complexity is two times the size of features, marginal w.r.t the number of variable in a typical DNN.

\subsection{The Independence Assumption in Drop-Bottleneck}
Although Drop-Bottleneck~\cite{kim2021drop} shares similar ideas to ours, i.e., discretely drops redundant features that are irrelevant to tasks. We emphasize two significant differences here:
\begin{itemize}
    \item Drop-Bottleneck does not build the connection between IB and invariance representation learning. It targets reinforcement learning and implements the maximization of mutual information $I(z;y)$ by the mutual information neural estimator (MINE) as has been used in the famed deep infoMax. By contrast, same to other popular IB approaches, we simply implement this term by cross-entropy.
    \item Drop-Bottleneck assumes the independence between each dimension of the representation $Z$ for simplicity (i.e., $Z_1\independent Z_2 \independent ... Z_p$), and claims that the total dependence among ${Z_1,Z_2,...,Z_p}$  could decrease to zero as the optimization progresses.
\end{itemize}

Although it is hard to directly compare GIB with Drop-Bottleneck, we additional provide two empirical justifications why the independence assumption is harmful. Fig.~(\ref{fig:appendix}) shows the total dependence when we train a MLP on MNIST. As can be seen, the total dependence (measured by state-of-the-art $T_\alpha^*$~\cite{yu2021measuring} and IDD~\cite{romano2016measuring}) is far away from zero, which indicates that the pairwise independence assumption does not hold across training. Fig.~(\ref{fig:appendix}) shows the result when we replace our entropy estimator (i.e., Eq.~(12)) with a simpler Shannon entropy estimator that assumes fully independence (i.e., $H(z)=\sum_i^p H(z_i)$). As can be seen, such assumption significantly degrade the performance.

\begin{figure}[htbp]
	\setlength{\abovecaptionskip}{0pt}
	\setlength{\belowcaptionskip}{0pt}
	\centering
	\subfigure[total dependence]{
		\includegraphics[width=0.23\textwidth]{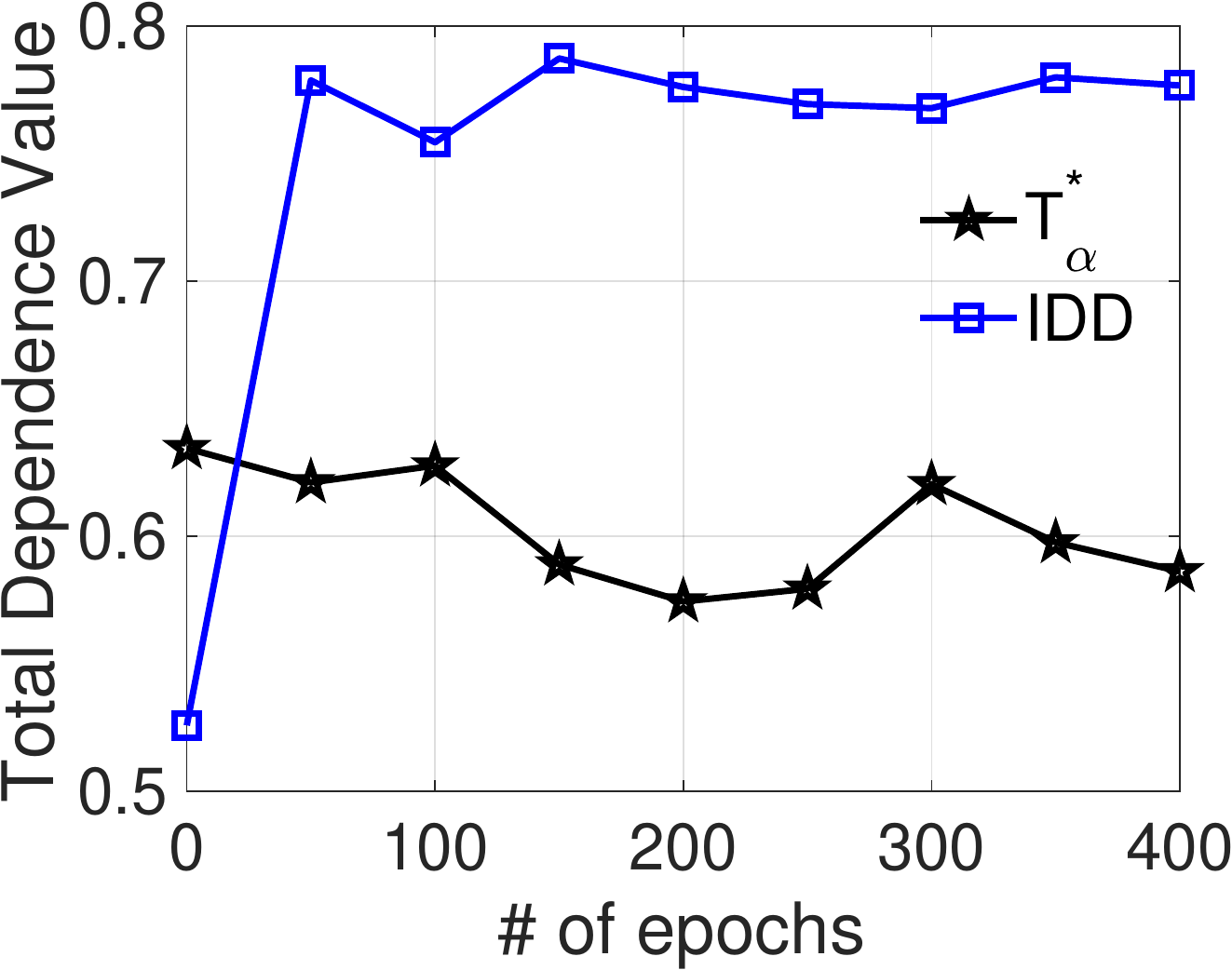}}
	\subfigure[adversarial robustness (PGD)]{
		\includegraphics[width=0.23\textwidth]{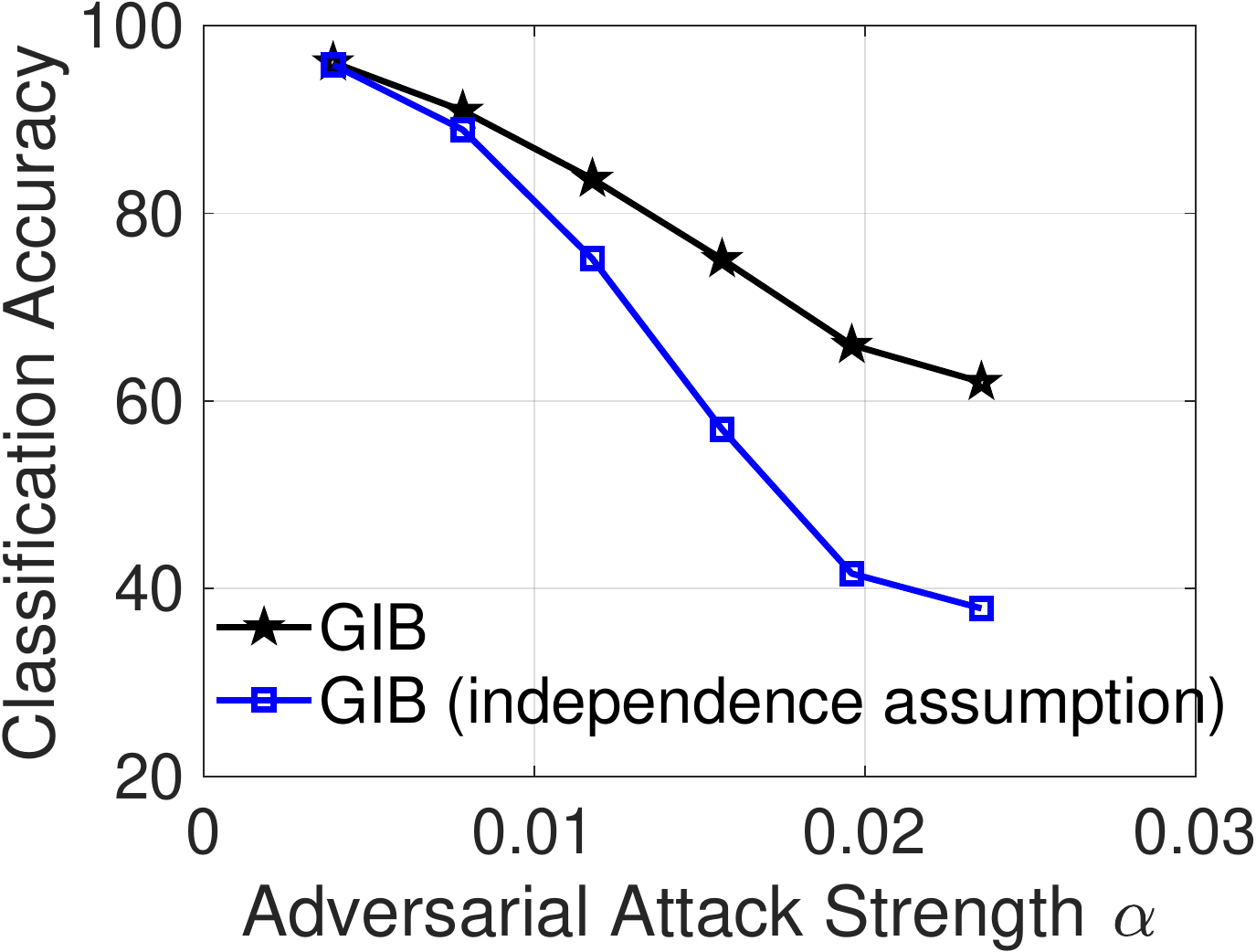}}
	\caption {(a) Total dependence value amongst each dimension of representation measured by $T_\alpha^*$ and IDD; and (b) PGD on MNIST.}
	\label{fig:appendix}
\end{figure}

\end{document}